\documentclass[11pt,letterpaper]{article}

\PassOptionsToPackage{hyphens}{url}

\usepackage[margin=1in]{geometry}
\usepackage[utf8]{inputenc}
\usepackage[T1]{fontenc}
\usepackage[hypertexnames=false]{hyperref}
\usepackage{url}
\usepackage{graphicx}
\usepackage{booktabs}
\usepackage{amsfonts}
\usepackage{nicefrac}
\usepackage{microtype}
\usepackage{xcolor}
\usepackage{natbib}
\usepackage{comment}
\usepackage{tikz}
\usepackage{mdframed}
\usepackage{amsmath,amssymb}
\usetikzlibrary{positioning,arrows.meta,calc,fit,backgrounds,decorations.pathreplacing,shapes.geometric}

\def\showauthornotes{0}

\usepackage{amsmath,amssymb}
\usepackage{algorithm}
\usepackage{algpseudocode}
\usepackage{cryptocode}
\usepackage{float}
\usepackage{amsthm}
\usepackage{bbm}
\usepackage{cleveref}
\usepackage{aliascnt}
\newtheorem{theorem}{Theorem}[section]
\newtheorem*{theorem*}{Theorem}

\usepackage{thmtools}
\usepackage{thm-restate}

\newif\ifinlineproofs
\inlineproofsfalse

\newtheorem*{proposition*}{Proposition}
\newtheorem{lemma}[theorem]{Lemma}
\newtheorem*{lemma*}{Lemma}

\newtheorem*{conjecture*}{Conjecture}

\newtheorem*{fact*}{Fact}

\newtheorem*{hypothesis*}{Hypothesis}

\theoremstyle{definition}
\newtheorem{definition}[theorem]{Definition}
\newtheorem*{definition*}{Definition}

\newtheorem*{problem*}{Problem}

\crefname{theorem}{theorem}{theorems}
\Crefname{theorem}{Theorem}{Theorems}

\crefname{definition}{definition}{definitions}
\Crefname{definition}{Definition}{Definitions}

\crefname{lemma}{lemma}{lemmas}
\Crefname{lemma}{Lemma}{Lemmas}

\theoremstyle{remark}

\newtheorem*{claim*}{Claim}
\newtheorem{remark}[theorem]{Remark}
\newtheorem*{remark*}{Remark}

\newtheorem*{observation*}{Observation}

\usepackage{prettyref}

\newcommand{\savehyperref}[2]{\texorpdfstring{\hyperref[#1]{#2}}{#2}}

\newrefformat{eq}{\savehyperref{#1}{\textup{(\ref*{#1})}}}
\newrefformat{prog}{\savehyperref{#1}{\textup{Program \ref*{#1}}}}
\newrefformat{eqn}{\savehyperref{#1}{\textup{(\ref*{#1})}}}
\newrefformat{lem}{\savehyperref{#1}{Lemma~\ref*{#1}}}
\newrefformat{def}{\savehyperref{#1}{Definition~\ref*{#1}}}
\newrefformat{thm}{\savehyperref{#1}{Theorem~\ref*{#1}}}
\newrefformat{cor}{\savehyperref{#1}{Corollary~\ref*{#1}}}
\newrefformat{cha}{\savehyperref{#1}{Chapter~\ref*{#1}}}
\newrefformat{sec}{\savehyperref{#1}{Section~\ref*{#1}}}
\newrefformat{app}{\savehyperref{#1}{Appendix~\ref*{#1}}}
\newrefformat{tab}{\savehyperref{#1}{Table~\ref*{#1}}}
\newrefformat{fig}{\savehyperref{#1}{Figure~\ref*{#1}}}
\newrefformat{hyp}{\savehyperref{#1}{Hypothesis~\ref*{#1}}}
\newrefformat{alg}{\savehyperref{#1}{Algorithm~\ref*{#1}}}
\newrefformat{rem}{\savehyperref{#1}{Remark~\ref*{#1}}}
\newrefformat{item}{\savehyperref{#1}{Item~\ref*{#1}}}
\newrefformat{step}{\savehyperref{#1}{step~\ref*{#1}}}
\newrefformat{conj}{\savehyperref{#1}{Conjecture~\ref*{#1}}}
\newrefformat{fact}{\savehyperref{#1}{Fact~\ref*{#1}}}
\newrefformat{prop}{\savehyperref{#1}{Proposition~\ref*{#1}}}
\newrefformat{prob}{\savehyperref{#1}{Problem~\ref*{#1}}}
\newrefformat{claim}{\savehyperref{#1}{Claim~\ref*{#1}}}
\newrefformat{relax}{\savehyperref{#1}{Relaxation~\ref*{#1}}}
\newrefformat{red}{\savehyperref{#1}{Reduction~\ref*{#1}}}
\newrefformat{part}{\savehyperref{#1}{Part~\ref*{#1}}}
\newrefformat{obs}{\savehyperref{#1}{Observation~\ref*{#1}}}
\newrefformat{corr}{\savehyperref{#1}{Corollary~\ref*{#1}}}

\ifnum\showauthornotes=1
\newcommand{\Authornote}[2]{{\sffamily\small\color{red}{[#1: #2]}}}
\newcommand{\Authornotecolored}[3]{{\sffamily\small\color{#1}{[#2: #3]}}}
\newcommand{\Authorcomment}[2]{{\sffamily\small\color{gray}{[#1: #2]}}}
\newcommand{\Authorstartcomment}[1]{\sffamily\small\color{gray}[#1: }

\newcommand{\Authorfnote}[2]{\footnote{\color{red}{#1: #2}}}
\newcommand{\Authorfixme}[1]{\Authornote{#1}{\textbf{??}}}
\newcommand{\Authormarginmark}[1]{\marginpar{\textcolor{red}{\fbox{\Large #1:!}}}}
\else
\newcommand{\Authornote}[2]{}
\newcommand{\Authornotecolored}[3]{}
\newcommand{\Authorcomment}[2]{}
\newcommand{\Authorstartcomment}[1]{}

\newcommand{\Authorfnote}[2]{}
\newcommand{\Authorfixme}[1]{}
\newcommand{\Authormarginmark}[1]{}
\fi

\newcommand{\Psymb}{\mathbb{P}}

\DeclareMathOperator*{\ProbOp}{\Psymb}

\renewcommand{\Pr}{\ProbOp}

\DeclareMathOperator{\poly}{poly}

\newcommand{\zo}{\{0, 1\}}

\inlineproofstrue

\newcommand{\horacle}{\mathcal{O}}

\newcommand{\vz}{\boldsymbol{z}}
\newcommand{\vg}{\boldsymbol{g}}

\newcommand{\dT}{\mathcal{T}}

\title{How to Have a Sensitive Debate:\\ An Instance-Optimal Protocol for AI Debate}

\author{
  Jiawei Li$^{*}$\\
  UT Austin\\
  \texttt{davidlee@cs.utexas.edu}
  \and
  Zhiyang Xun$^{*}$\\
  UT Austin\\
  \texttt{zxun@cs.utexas.edu}
  \and
  Lijie Chen\\
  UC Berkeley\\
  \texttt{lijiechen@berkeley.edu}
  \and
  Jonah Brown-Cohen\\
  Google DeepMind\\
  \texttt{jonahbc@google.com}
}
\date{}

\begin{document}

\maketitle
\begingroup
\makeatletter
\renewcommand{\@makefnmark}{}
\makeatother
\footnotetext[0]{*Equal contribution. Listing order is random.}
\endgroup

\begin{abstract}
As powerful AI systems reach and sometimes surpass the abilities of human experts across a range of cognitively demanding tasks, the problem of accurate oversight and supervision of these systems has become increasingly urgent. One promising approach is AI debate, which seeks to leverage a debate between two powerful AIs to break complex questions down into simpler claims that can be easily judged directly. Theoretical work on debate has formalized this intuition in the language of computational complexity theory, where the goal is to design protocols (i.e., rules of the debate game) that provide rigorous guarantees on correctness for judging solutions to complex problems with limited supervision. Specifically, the current best protocol has been shown to work for all problems that have sufficiently stable decompositions into subproblems. In this paper, we design a new protocol for this same class of problems that improves on the prior work in several ways. First, correctness holds in a worst-case rather than an average-case sense. Second, being honest and correct is a dominant-strategy equilibrium for both debaters, rather than a Stackelberg equilibrium. Finally, we prove black-box lower bounds, showing that our new protocol is instance-wise optimal. That is, no protocol for this class of problems can outperform ours while making only black-box queries to human judgments. We obtain these results by relating the notion of stable problem decompositions to the concept of fractional block sensitivity from query complexity.
\end{abstract}

\section{Introduction}
Modern AI systems are trained to perform complex reasoning tasks including autonomous software engineering \citep{anthropic2026opus46}, novel mathematical research \citep{abouzaid2026proof}, and general economically valuable tasks across a wide range of occupations \citep{Patwardhan2025GDPvalEA}. One of the core drivers of this progress is the ability to design training environments with verifiable reward signals, e.g., numerical solutions or automatically verifiable proofs for mathematics or automated test cases for code. As the scope of tasks performed by AI systems broadens and increases in complexity, it becomes increasingly difficult to provide automatically verifiable rewards that capture all relevant aspects of the task. Furthermore, many tasks of interest have inherently subjective or fuzzy descriptions, so that human verification of AI outputs is necessary. These difficulties lead to the problem of \emph{scalable oversight} --- the design of training protocols that leverage the power of rapidly improving AI systems in order to augment human abilities to provide accurate training supervision to these same systems \citep{leike2018scalable}.

As a motivating example, consider the case of an autonomous AI agent tasked with completing a research project in the quantitative social sciences. The description of the task will likely be high-level and somewhat open-ended, possibly including relevant data sources and the question the analysis is trying to answer. However, in any such project there will be many individual decisions with multiple plausible choices, along with requirements that are not explicitly mentioned, e.g., don't calculate hundreds of different subpopulation statistics from the data, and then choose to analyze the most surprising one post hoc.
Hence, directly evaluating the full output of an autonomous AI agent would both require human judgment to some degree and be very difficult even for an expert human to judge directly. Furthermore, the cost of failing to provide accurate supervision in such settings can be quite high. For example, even in slightly misconfigured training environments for software engineering, AI systems often find cheating solutions, such as editing test cases to always pass, rather than actually solving the problem. Worse still, in some such cases AI systems have learned to generalize this negative behavior, resulting in dishonest and harmful actions across many other settings unrelated to coding \citep{macdiarmid2025naturalemergentmisalignmentreward}.

One of the primary approaches to addressing the scalable oversight question is \emph{AI debate}, where two powerful AI systems argue about the correctness of an AI output, and a human judges the outcome \citep{irving2018ai}. Intuitively, the idea is that each debater will be able to recursively zoom in on individual flaws in specific pieces of their opponent's arguments, until the original problem has been broken down to a level that is simple enough for a human to judge directly. Critically, there are complexity-theoretic formalizations of this setup that can allow for rigorous definitions and evaluations of what it means for limited human judges to verify the outputs of AI systems that are much too complex to judge.

Prior theoretical research has focused on the design of provably-correct debate protocols, i.e., rules of the game that constrain the AI debaters and specify how the human judge should evaluate their arguments. This prior work encountered a major obstacle known as the \emph{obfuscated arguments problem}, where one debater can efficiently produce a dishonest argument which cannot be efficiently refuted by their opponent \citep{barnes2020obfuscated}. While it was shown to be possible to obtain provably-correct protocols for certain limited classes of problems \citep{brown-cohen24scalable}, the initial promise of debate for recursively decomposing highly complex problems remained in doubt.
Subsequent progress on the obfuscated argument problem was made via the prover-estimator debate protocol of \cite{browncohen2025avoiding}, which avoids obfuscated arguments for a class of problems that admit \emph{sufficiently stable} decompositions into subproblems. Despite this progress on expanding the class of problems that can be handled by debate, the prover-estimator protocol has additional limitations when compared to prior work: it provides average-case guarantees that hold for a weaker equilibrium concept than previously studied.

In this paper we design a new debate protocol that provides worst-case guarantees for the original, stronger dominant-strategy equilibrium concept. The protocol handles exactly the same class of sufficiently stable problems as prover-estimator debate, while also being simpler to execute and analyze. We further prove that this protocol is instance optimal unless further structural assumptions are made on the human judgments used to verify debates.

\subsection{Our Results}
Our new protocol is simple to describe. The first debater, Alice, decomposes the original claim into sub-claims. The second debater, Bob, then has two choices: either (1) demonstrate that Alice's decomposition is too \emph{sensitive} to small errors in the sub-claims, or (2) choose a sub-claim on which to continue the debate recursively.
The key to the analysis is the equivalence between the notion of stable decompositions required for prover-estimator debate and bounds on \emph{fractional block sensitivity}, a quantity defined in the query complexity literature which is tightly connected to the number of queries a randomized verifier must make to check an untrusted proof~\citep{KulkarniTal2016fbs}.
In particular, an upper bound on the fractional block sensitivity of Alice's decomposition implies an efficiently computable strategy for Bob to select a sub-claim that has a non-trivial chance of being incorrect.

The equivalence to fractional block sensitivity further enables us to prove that the performance of our protocol is \emph{instance optimal} assuming black-box access to human judgments. That is, if the particular problem instance admits a decomposition with fractional block sensitivity $\rho$, then \emph{any protocol} which correctly decides the outcome on this instance cannot have asymptotically lower complexity for the debaters.
All prior theoretical work on debate has modeled human judgment as a black-box oracle, so this result can be interpreted as demonstrating the necessity of going beyond this model in order to further expand the class of problems solvable via debate.

\subsection{Technical Overview}
We next give an informal mathematical overview of our protocol and proof techniques. We will begin by describing how the protocol works in our running example of an autonomous AI agent Alice performing quantitative social science research.

\paragraph{The protocol in action.} In order to efficiently evaluate the correctness of the final research paper and codebase produced by the quantitative social science agent, we use a debate between Alice, who defends her work, and another AI Bob who attempts to find flaws. To determine the validity of a claim in the paper, we ask Alice to decompose the original claim $x$ into sub-claims $y_1,\dots,y_q$. One can view these sub-claims as pieces of supporting evidence. For example, the claim $x$ might be that a certain statistical modeling assumption is justified. Suppose that we would believe this claim if a majority of correct, published papers in the existing literature supported it. In this case, the sub-claims $y_i$ are of the form: this particular paper from the existing literature is both correct and supports the original claim. Furthermore, the truth of the original claim $x$ is the majority vote of the truth values of the sub-claims $y_i$.

Next, suppose that Alice claims that in fact $2/3$ of the published papers $y_i$ support the claim. We view this as an assignment by Alice of alleged truth values $z_i$ to each sub-claim $y_i$. In this case, the key insight is that Bob need not check every single paper for correctness in order to verify the overall claim. Instead, Bob can just randomly sample a paper among the $2/3$ that Alice says support the original claim, i.e., those where $z_i = 1$. If Alice is lying about the original claim $x$, then at least $1/6$ of the sub-claims must also be lies. Hence, by sampling randomly among the $2/3$ where $z_i = 1$, Bob will have a $\frac{1/6}{2/3} = 1/4$ chance of catching Alice in a lie. Thus, just by sampling a few research papers from the entire area to check, Bob can be confident of catching Alice in a lie.

On the other hand, suppose that Alice claims that 50 out of a total of 99 papers $y_i$ support the original claim $x$. In this case, Bob has no recourse but to check every single one of the 50 papers, as even a single incorrect paper would flip the majority vote and overturn the truth of the original claim. The key to our protocol is that Bob is allowed to demonstrate that Alice has made an unreasonably sensitive claim of this form, in which case Alice will lose the debate. This in turn requires Alice to only decompose her claims in a way that is not too sensitive to errors in individual sub-claims. We view this as a very natural restriction when arguing about claims that inherently involve fuzzy human judgment. Indeed, one should be very suspicious of any argument that is highly sensitive to changes in individual pieces of evidence, and yet relies on fuzzy human judgments in each step.

\paragraph{Formalization via fractional block sensitivity.}
Our protocol follows the same outline as described in the example: Alice decomposes a claim $x$ into sub-claims $y_i$, Bob either objects that Alice's decomposition is too sensitive, or instead selects a sub-claim to debate further. The key technical component is the notion of \emph{fractional block sensitivity} which generalizes the intuition above about the majority function to all possible aggregation functions Alice might choose. In particular, the fractional block sensitivity $\rho$ is defined as the optimum of a linear program that depends on the aggregation function and Alice's alleged truth values $z_i$ for the sub-claims $y_i$. In the case of the majority vote function with $2/3$ of the truth values $z_i$ set to 1, the optimum of the linear program is $\rho = 4$, which corresponds to Bob's chance of $1/4$ of catching Alice in a lie if he samples a random $y_i$ among the $2/3$ claimed true. This correspondence is not a coincidence--it generalizes to any aggregation function.
More formally, the dual solution of the fractional block sensitivity linear program directly yields probabilities $p_i$ for Bob to sample sub-claims. When sampling sub-claims with these probabilities, Bob has at least a chance of $1/\rho$ of catching Alice in a lie. See \Cref{sec:fbs-def} for the precise formal definition.

The protocol runs recursively on the sub-claim sampled by Bob, until it reaches a depth of $d$, whereupon a human judge directly evaluates the final selected sub-claim. Unless Alice wants to lose immediately, she must ensure that in each round the decomposition into sub-claims that she makes has fractional block sensitivity bounded by $\rho$. Thus, we conclude that if Bob follows the sampling scheme provided above, he will catch Alice in a lie with probability at least $\rho^{-d}$. Hence, by running the protocol $O(\rho^d \log n)$ times, Bob will catch Alice in a lie with probability at least $1 - O(1/\poly(n))$. See \Cref{sec:protocol} for details. To make sense of this result, recall that Alice decomposes each original claim into $q$ sub-claims. So exploring the full tree of sub-claims would require time $q^d$. As shown in the example of the majority vote aggregation function, one can have $q=99$, while $\rho =4$, implying that $\rho^d$ can be significantly smaller. In the standard setting, the debate has depth $d=O(\log n)$, implying that our protocol allows for a polynomial in $n$ speed-up beyond exhaustive search of Alice's decomposition.

\paragraph{Honesty and correctness are dominant strategies.}
Our protocol for fractional block sensitivity (FBS) debate satisfies the key properties of \emph{completeness} and \emph{soundness}. Completeness requires that if the original claim $x$ is correct, then no matter what strategy Bob employs, Alice can always win by honestly and correctly decomposing every claim. Similarly, soundness requires that if the original claim $x$ is incorrect, then no matter what strategy Alice employs, Bob can win with overwhelming probability by honestly following the prescribed protocol. In other words, honestly and correctly following the prescribed protocol is a \emph{dominant strategy} for each player in the debate game. This is in contrast to the prior prover-estimator debate protocol, which only guarantees soundness under the following Stackelberg equilibrium condition: for any dishonest strategy selected by Alice, there exists a winning strategy by Bob, which in general depends on Alice's strategy in a potentially complicated way. This is a technically much weaker condition, and in practice training for dominant strategy equilibria is much more stable and tractable than Stackelberg equilibria.

Furthermore, our protocol applies on a worst-case basis: if an individual claim $x$ is incorrect, our protocol will allow Bob to efficiently identify a flawed sub-claim for the human judge. In contrast, the prover-estimator protocol only satisfies soundness on average over a distribution on inputs. This is sufficient to guarantee that training the debaters to maximize their expected win rates leads to a debate equilibrium where the debaters are honest on all but a small fraction of questions. However, our protocol for FBS debate goes further by guaranteeing that on each individual question, honesty is the best policy.

\paragraph{Instance-wise optimality.}
Finally, we show in \Cref{sec:lower-bound} that any protocol with only black-box access to human judgments must make at least $\Omega(\rho^d)$ queries to find a flawed sub-claim with constant probability, matching the cost of our FBS debate protocol. This lower bound holds instance-wise: if the best decomposition of an original claim $x$ into a tree of sub-claims has fractional block sensitivity $\rho$ in each step, then $\rho^d$ queries are required on that particular instance $x$. The assumption of black-box access to human judgments is critical for the lower bound to hold. One can view this instance-wise optimality as demonstrating that the black-box judgment assumption, which is common to all prior theoretical work on debate, must be relaxed in order to make further progress on debate protocols.

\section{Related work}
Prior work on the complexity-theoretic formulation of debate began with \cite{irving2018ai}, whose protocol works for computationally unbounded provers. Subsequently, \cite{barnes2020obfuscated} pointed out the obfuscated arguments problem, demonstrating that current protocols would fail in the case that the AI debaters were equally computationally powerful, but still computationally bounded. \cite{brown-cohen24scalable} introduced doubly-efficient debate, which works for computationally bounded debaters, but solves a more limited class of problems. Most closely related to our work is that of \cite{browncohen2025avoiding}, which introduced the prover-estimator debate protocol to handle a broader class of recursively decomposable problems for computationally bounded debaters. Prover-estimator debate introduced the notion of stability (equivalent to fractional block sensitivity) that we study in this paper.

The idea of having computationally unbounded provers convince a polynomial-time verifier was already present in the work on alternating Turing machines of \cite{chandra1981alternation}. Another complexity-theoretic model of arguments involving private communication between adversarial provers and a judge is due to \cite{feige1997making}. Finally, competing provers have been studied in the context of secure delegation of computation by \cite{canetti2013refereed}.
On the algorithmic side, \cite{chen2023playing} designed algorithms for the problem of learning the equilibrium in the zero-sum game defined by a debate protocol. Here the focus is on ensuring the learning algorithms and the effort of the judge are computationally efficient while the strategy space may be exponentially large.

There have also been several recent works on utilizing more classical single-prover interactive proofs in machine learning. This has been done most frequently with the goal of verifying that the output of an AI is correct \citep{amit2024models,anil2021learning,hammondneural, chen2026avoid}, while some research has also focused on using these techniques for interpretability and legibility \citep{waldchen2022merlin,kirchner2024prover}. 
In particular, \cite{chen2026avoid} construct doubly-efficient interactive proofs and arguments for \emph{oracle} circuits when either the computation is \emph{robust} to corruption of a small fraction of its oracle transcript or the oracle has low-degree algebraic structure. Their robustness condition is a uniform-weight special case of bounded FBS at the level of a fixed one-step aggregator, while our local FBS condition permits an optimized nonuniform sampling distribution.\footnote{Their $\varepsilon$-robustness assumption implies
$\operatorname{fbs}=O(1/\varepsilon)$ for the induced Boolean function:
uniformly assigning weight $1/(\varepsilon\ell)$ to each of the $\ell$
oracle-answer coordinates gives a feasible solution to the dual FBS
program.} Their full framework remains incomparable to ours: their protocol assumes an explicit circuit fixed at the beginning for the target computation, whereas ours uses an (online) recursive decomposition framework with a fixed one-step aggregator. Their low-degree-oracle result is orthogonal to our lower bound because it imposes additional algebraic structure on the oracle.

Several empirical works have measured the performance of actual LLMs in setups inspired by theoretical debate protocols \citep{michael2023debate,kentonscalable,khan2024debating}. These results provided some evidence that debate helps certain limited LLM judges more accurately verify the output of other LLMs. Finally, large-scale training for mathematical reasoning has relied on the power of a debate-like training protocol in which one model critiques a proof and another model further critiques the critique \citep{shao2025deepseekmathv2}.

\section{Preliminaries}

\subsection{Complexity-theoretic framework of AI debate}

    We adopt a complexity-theoretic framework (cf. \cite{browncohen2025avoiding}) for studying the power of AI debate.
    We represent the human judgment as an oracle $\horacle:\{0,1\}^{n} \to \{0,1\}$, where the value of $\horacle(u) \in \{0,1\}$ corresponds to the human judgment on a particular question $u$.
    Both debaters and the judge are algorithms modeled as Turing machines given black-box query access to the oracle $\horacle$. We denote by $M^\horacle(x)$ the result of running algorithm $M$ on input $x$ given access to $\horacle$.

    The aim of debate is to design protocols that solve broad classes of computational problems while using few queries to the human-judgment oracle $\horacle$.
    Any decision problem (yes-or-no problem) can be represented by a \emph{language} $L \subseteq \{0,1\}^*$, where $x \in L$ means that the correct answer on input $x$ is ``yes.''
    Equivalently, we may view $L$ as a Boolean function $L:\{0,1\}^n \to \{0,1\}$, with $L(x)=1$ iff $x\in L$.

\subsection{Recursively decomposable problems}

    In this paper, we focus on the class of problems $L$ that exhibit a recursive decomposition. There are two components of a recursive decomposition: a \emph{one-step aggregator}, and a \emph{decomposition tree}. Intuitively, if the original problem $x$ is decomposed into $q$ sub-problems $y = (y_1,\dots,y_q)$ in one step, then a one-step aggregator should return the correct answer for $x$ when given the correct answers to $y$.

    \begin{definition}[One-step aggregator]\label{def: aggregator}
        Let $L$ be a language. A width $q$ one-step aggregator $M_D$ for $L$ is an algorithm that takes as input a problem instance $x\in \{0,1\}^n$, a sequence of $q$ \emph{sub-problems} $y = (y_1,\dots,y_q)\in\{0,1\}^{n\times q}$, and a sequence of $q$ \emph{answers} $z = (z_1,\dots,z_q) \in \{0,1\}^q$. The pair $(x,y)$ is called a one-step decomposition of $x$.

        The one-step aggregator $M_D$ should either accept or reject a one-step decomposition $(x,y)$. If $M_D$ rejects a one-step decomposition $(x, y)$, then $M_D(x,y,z) = \bot$ for any $z$; otherwise, $M_D(x,y,z) \in \{0,1\}$ for any $z$.
        Furthermore, if $M_D$ accepts a one-step decomposition $(x,y)$, then $M_D(x,y,z) = L(x)$ if $z$ gives the correct answers to $y$, i.e., $z_i = L(y_i)$ for all $i \in [q]$.

    \end{definition}

    In the setting of AI debate, we assume the one-step aggregator $M_D$ is a rather simple and efficient algorithm that all parties agree on at the beginning of the protocol. One simple example of $M_D$ is taking the \textrm{AND} of all answers $z_i$, which captures a classic debate strategy --- if all the supporting sub-claims are correct, then the original claim is also correct. We will be interested more generally in one-step aggregators that make fuzzier use of supporting sub-claims. To return to the running example of the autonomous agent performing quantitative social science research, the one-step aggregator might take the form: if the majority of published papers in an area support the original claim, then the claim is true. In this case, the supporting sub-claims $y_i$ correspond to the set of published papers in the area, and the aggregator $M_D$ is the majority vote of $z_i$, each of which represents whether or not the paper $y_i$ supports the original claim.

    We further require that the property of $M_D$ as defined in \Cref{def: aggregator} can be easily verified by human judgment.
    Note that in this definition, $M_D$ is allowed to reject a possibly correct but hard-to-verify one-step decomposition; it could still be useful as long as it accepts most of the obviously valid decompositions. Given this, we can assume that $M_D$ never outputs $\bot$ in a debate protocol without loss of generality: If any debater proposes a one-step decomposition rejected by $M_D$, then we can make that debater immediately lose the debate, so any clever debater would avoid that.

    We now define the decomposition tree using the one-step aggregator.

    \begin{definition}[Decomposition Tree]
        \label{def:recursive-decomposition}
        Let $L$ be a language and fix a width $q$ one-step aggregator $M_D$ for $L$. A depth $d$, width $q$ \emph{decomposition tree}  $\dT$ for input $x \in \{0, 1\}^n$ is defined as follows.

        \begin{enumerate}
            \item Each tree node $u$ corresponds to a problem instance $\hat{x}_u \in \{0,1\}^n$ for language $L$;  the root node at level $k = d$ corresponds to the input $x$.
            \item For each non-leaf node $u$ at level $k > 0$, we have $L(\hat{x}_u) = M_D(\hat{x}_u,y_u,z_u)$, where $y_u = (y_{u,1},\dots,y_{u,q})$ are the sub-problems from the child nodes of $u$ at level $k-1$, and $z_{u} = (L(y_{u,1}), \ldots, L(y_{u,q}))$.
            \item For each leaf node $u$ at level $k=0$, there exists a single query $q_u$ such that $L(\hat{x}_u) = \horacle(q_u)$.
        \end{enumerate}
    \end{definition}

    Unlike the one-step aggregator $M_D$, which is fixed at the beginning, a decomposition tree $\dT$ for a given input $x$ is usually generated on the fly as the debate protocol proceeds. A brute-force debate protocol that visits all the leaves of a decomposition tree requires $q^d$ queries to the oracle $\horacle$.\footnote{One can apply the \emph{doubly-efficient} debate protocol~\cite{brown-cohen24scalable} here to reduce the number of queries to $\horacle$ to only $1$. However, the time complexity of this protocol is still $O(q^d)$, which is infeasible in most cases.} A better debate protocol based on recursive decomposition should avoid checking most of the nodes in the decomposition tree.

\subsection{Fractional block sensitivity}
\label{sec:fbs-def}
The main structural assumption that characterizes the class of problems solvable by our protocol is that the functions arising in the decomposition of the problem have bounded \emph{fractional block sensitivity}. Fractional block sensitivity is a metric from query complexity defined for a Boolean function $f$. Intuitively, it captures how closely a distribution can match its individual marginals to a fixed input $x$ while still flipping the output of $f$.

To gain an intuition for this notion, consider the majority vote function $f(x) = \textsc{Majority(x)}$ and the string $x\in\{0,1\}^n$ where the first 2/3rds of the inputs are 1 and the rest are 0. In this case, in order to change the output of $f(x)$, one would need to flip at least 1/6th of the input bits among the 2/3rds that are 1. Thus, in any distribution which flips the output, the marginal distribution on each input coordinate in the first 2/3rds must be off by at least $\frac{1/6}{2/3} = 1/4$. In this case the fractional block sensitivity of $f$ would be $4$.

On the other hand, for the Boolean AND function $f(x) = \textsc{And}(x)$ on the all-ones input, it is possible to flip the output of $f$ by selecting just one input bit uniformly at random to flip to 0. The marginals of this distribution are all $\frac{1}{n}$, and so the fractional block sensitivity is $n$. This is a much higher fractional block sensitivity than for the majority vote function, capturing the idea that it is much easier to hide a single mistake that changes the entire outcome for the AND function than it is for the majority function. In general, fractional block sensitivity generalizes this intuition that one should not be able to stealthily flip the output of a function while maintaining a very similar marginal distribution on input bits.

\begin{definition}[Fractional block sensitivity]\label{def: fbs}
Let $f : \{0,1\}^n \to \{0,1\}$ and $x \in \{0,1\}^n$. For a nonempty set $B \subseteq [n]$, let $x^{B}$ denote the string obtained from $x$ by flipping the bits in the coordinates of $B$, i.e.,
\[
(x^{B})_i \;=\; \begin{cases}
1 - x_i & \text{if } i \in B,\\
x_i & \text{if } i \notin B.
\end{cases}
\]
A set $B \subseteq [n]$ is a \emph{sensitive block for $f$ at $x$} if $f(x) \neq f(x^{B})$. Write $\mathcal{S}_x(f)$ for the family of all sensitive blocks at $x$.

The \emph{fractional block sensitivity of $f$ at $x$} is the optimal value of the linear program
\[
\operatorname{fbs}_x(f) \;=\; \max \left\{ \sum_{B \in \mathcal{S}_x(f)} w_B \;\middle|\;
\begin{array}{l}
w_B \ge 0 \ \text{for all } B \in \mathcal{S}_x(f),\\[2pt]
\displaystyle \sum_{\substack{B \in \mathcal{S}_x(f)\\ i \in B}} w_B \;\le\; 1 \quad \text{for all } i \in [n]
\end{array}
\right\}.
\]
The \emph{fractional block sensitivity of $f$} is
\[
\operatorname{fbs}(f) \;=\; \max_{x \in \{0,1\}^n} \operatorname{fbs}_x(f).
\]
\end{definition}

\begin{definition}[Dual of fractional block sensitivity]
Fix $f : \{0,1\}^n \to \{0,1\}$ and $x \in \{0,1\}^n$, and let $\mathcal{S}_x(f)$ denote the family of sensitive blocks at $x$ (as above).
The dual linear program to the definition of $\operatorname{fbs}_x(f)$ is
\[
\operatorname{fbs}_x(f)
\;=\;
\min \left\{
\sum_{i=1}^n t_i
\ \middle|\
\begin{array}{l}
t_i \ge 0 \quad \text{for all } i \in [n],\\[2pt]
\displaystyle \sum_{i \in B} t_i \;\ge\; 1 \quad \text{for all } B \in \mathcal{S}_x(f)
\end{array}
\right\}.
\]

\end{definition}

The dual assigns nonnegative weights $\{t_i\}_{i\in[n]}$ to coordinates so that every sensitive block $B$ has total weight at least $1$; it minimizes the total weight $\sum_i t_i$.
Thus the dual is a fractional \emph{hitting set/cover} of the sensitive blocks by coordinates.\footnote{The dual of fractional block sensitivity is essentially the \emph{randomized certificate complexity} defined in \cite{aaronson2008quantum}.}
By strong duality, its optimum equals the primal value $\operatorname{fbs}_x(f)$.

\section{Protocol}
\label{sec:protocol}
In this section, we present our debate protocol based on the fractional block sensitivity of the recursive decomposition. We then prove that our protocol satisfies both the completeness and soundness conditions in the worst case.

\vspace{1mm}
\begin{mdframed}[hidealllines=true,backgroundcolor=gray!10]
    \begin{center}
        \textbf{FBS Debate Protocol}
    \end{center}

    \vspace{1mm}

    \textbf{Setup:} Access to the input $x \in \{0,1\}^n$, a language $L$, the human-judgment oracle $\mathcal{O}$, a one-step aggregator $M_D$ with width $q$, the depth of the decomposition tree $d$, and a hyperparameter $\rho$.

    \vspace{1mm}

    \textbf{Initialization:} Alice outputs a bit $A(x)$ as the answer to $L(x)$.

    \vspace{1mm}

    \textbf{At level $k\in \{d, d-1,\ldots,1 \}$:}
    \begin{enumerate}
        \item Alice decomposes the current problem $\hat{x}$ into $q$ sub-problems $y = (y_1, \ldots, y_q)$, and then outputs a sequence $z = (z_1, \ldots, z_q)$ as the answers to $y_1, \ldots, y_q$. $M_D(\hat{x},y,z)$ must be equal to the answer Alice claimed for $\hat{x}$ in the previous round, denoted by $A(\hat{x})$; otherwise Alice loses immediately.
        \item Bob checks whether $\operatorname{fbs}_z\left(M_D(\hat{x}, y,\cdot)\right) \leq \rho$
        by solving the dual LP. If not, Alice immediately loses.
        \item If true, let $p_i \coloneqq t_i / \sum_{j=1}^{q} t_j$, where $t$ is the optimal solution to the dual LP. With probability $p_i$, Bob will pick the $i$-th sub-problem $y_i$ for recursion, set $\hat{x} \leftarrow y_i$ and $A(\hat{x}) \leftarrow z_i$, and continue to level $k-1$.
    \end{enumerate}

    \vspace{1mm}

    \textbf{At the leaf level ($k=0$):} If the answer Alice claimed for the current sub-problem matches $\mathcal{O}$, Alice wins; otherwise Bob wins.

\end{mdframed}
\vspace{1mm}

We show that an honest Alice can win the debate if Alice can find a decomposition tree solving the problem with low fractional block sensitivity.

\begin{theorem}[Completeness]
    Let $T, T_D, Q$ be three parameters.
    An honest Alice can always win in the FBS debate on input $x$ in $O\big(d\cdot (T_D + q \cdot T)\big)$ time and $O(d\cdot q \cdot Q)$ queries to $\horacle$
    if there exists a decomposition tree $\dT$ for $x$ such that
    \begin{itemize}
        \item Alice can calculate the children of any non-leaf node in $\dT$ in time $T_D$.
        \item Alice can solve $L(\hat{x}_u)$ at any node $u$ in $\dT$ in time $T$ and $Q$ queries to $\horacle$; \hfill ({Solvability})
        \item For every non-leaf node $u$ in $\dT$, if $y$ are the sub-problems in the child nodes of $u$ and $z$ are the correct answers to $y$, then
        $\operatorname{fbs}_z\left(M_D(\hat{x}_u, y,\cdot)\right) \leq \rho$. \hfill ({FBS condition})
    \end{itemize}
\end{theorem}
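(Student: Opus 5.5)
The honest strategy for Alice is to follow the decomposition tree $\dT$ and report true answers everywhere. At initialization she computes $A(x)=L(x)$ using the Solvability assumption, which costs time $T$ and $Q$ queries. The proof then proceeds by induction on the level $k$, from $d$ down to $0$, maintaining the following invariant: the current problem $\hat{x}$ equals $\hat{x}_u$ for some node $u$ of $\dT$ at level $k$, and the answer Alice committed to for it satisfies $A(\hat{x})=L(\hat{x}_u)$. At the root this holds by construction.

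For the inductive step at a level $k\ge 1$, Alice outputs the children $y=y_u$ of $u$, which takes time $T_D$. She then computes $z_i=L(y_{u,i})$ for each $i\in[q]$ by Solvability, at a cost of $q\cdot T$ time and $q\cdot Q$ queries. We check the three steps of the protocol in turn:

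\begin{enumerate}
\item By item 2 of \Cref{def:recursive-decomposition}, $M_D(\hat{x}_u,y,z)=L(\hat{x}_u)=A(\hat{x})$, so the consistency check passes.
\item By the FBS condition, $\operatorname{fbs}_z(M_D(\hat{x}_u,y,\cdot))\le\rho$, so Bob's sensitivity check passes. This holds regardless of how Bob behaves, since the check is deterministic given $(\hat{x},y,z)$.
\item Whichever index $i$ Bob selects, whether by the prescribed $p_i$ or by any deviation, the new problem is $y_{u,i}=\hat{x}_{v}$ for a child $v$ of $u$ at level $k-1$, and the committed answer is $z_i=L(\hat{x}_v)$.
\end{enumerate}

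So the invariant is preserved.

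At level $0$, the current node $u$ is a leaf and Alice's committed answer is $L(\hat{x}_u)$. Item 3 of \Cref{def:recursive-decomposition} gives $L(\hat{x}_u)=\horacle(q_u)$, so Alice's answer matches the oracle and she wins. The one point to be careful about here is that the protocol's leaf judgment really is $\horacle$ applied to the query $q_u$ of the tree. I read the protocol's phrase ``matches $\mathcal{O}$'' in exactly this sense.

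The cost is then a sum over the $d$ levels of $T_D+qT$ time and $qQ$ queries, plus the initial $T$ time and $Q$ queries. This totals $O(d(T_D+qT))$ time and $O(dqQ)$ queries, as claimed. There is no real obstacle in this argument. The only subtlety is that completeness must hold against an arbitrary Bob, and that is handled because the invariant holds for every child, not just the one chosen with the prescribed probabilities.
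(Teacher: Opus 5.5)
Your proposal is correct and follows essentially the same argument as the paper. Alice plays the honest strategy along $\mathcal{T}$, and an induction over levels $k=d,\dots,0$ shows that the consistency check, the FBS check, and the leaf check all pass no matter which child Bob selects, with the same per-level cost accounting. You also spell out details the paper leaves implicit: the invariant tracked along the path, the fact that consistency comes from item 2 of the decomposition-tree definition (Solvability only supplies the answers), and the extra $T$ time and $Q$ queries for computing $A(x)$ at initialization, which the bound absorbs when $d\ge 1$.
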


\definecolor{aliceC}{HTML}{1F77B4}   %
\definecolor{bobC}{HTML}{D62728}     %
\definecolor{checkC}{HTML}{2CA02C}   %
\definecolor{nodeC}{HTML}{F6F6F6}    %
\definecolor{accentC}{HTML}{FF7F0E}  %

\begin{figure}[t!]
  \centering
  \resizebox{\linewidth}{!}{%
\begin{tikzpicture}[
  >={Stealth[length=2.4mm,width=2.2mm]},
  font=\normalsize,
  every node/.style={inner sep=2pt},
  pnode/.style={draw,rounded corners=2pt,fill=nodeC,minimum height=9mm,minimum width=12mm,align=center,font=\Large},
  subnode/.style={draw,circle,fill=nodeC,minimum size=13mm,inner sep=0pt,align=center,font=\Large},
  bigsubnode/.style={draw,circle,fill=nodeC,minimum size=15mm,inner sep=0pt,align=center,font=\Large},
  chosen/.style={draw=accentC,line width=1.3pt,fill=accentC!12},
  alicebox/.style={draw=aliceC,line width=1pt,rounded corners=3pt,fill=aliceC!8,align=left,inner sep=7pt,font=\Large},
  bobbox/.style={draw=bobC,line width=1pt,rounded corners=3pt,fill=bobC!8,align=left,inner sep=7pt,font=\Large},
  checkbox/.style={draw=checkC,line width=1pt,rounded corners=3pt,fill=checkC!10,align=left,inner sep=7pt,font=\Large},
  arr/.style={->,thick},
  darr/.style={->,thick,dashed},
  lbl/.style={font=\Large\itshape},
]

\node[pnode,minimum width=36mm,minimum height=14mm] (xhat) {$\hat{x}$ \\
Alice claims $A(\hat{x})$};
\node[lbl,above=1mm of xhat] {level $k$};

\node[alicebox, right=18mm of xhat,
      align=left,text width=48mm] (alice)
  {\textbf{\textcolor{aliceC}{Step 1 — Alice}}\\[1pt]
   Decomposes $\hat x$ into\\
   $\;\;\;y=(y_1,\dots,y_q),$\\
   claims answers\\
   $\;\;\;z=(z_1,\dots,z_q)$.\\[2pt]
   Must satisfy\\[-1pt]
   $\;\;\;M_D(\hat x,y,z)=A(\hat x)$\\
   else \textbf{\textcolor{bobC}{Alice loses}}};

\draw[arr,aliceC] (xhat.east) -- (alice.west);

\coordinate (childAnchor) at ($(alice.south)+(0,-19mm)$);

\node[subnode] (y1)    at ($(childAnchor)+(-36mm,0)$) {$y_1$};
\node[subnode] (y2)    at ($(childAnchor)+(-18mm,0)$) {$y_2$};
\node[subnode] (ydots) at ($(childAnchor)+(  0mm,0)$) {$\cdots$};
\node[subnode] (yi)    at ($(childAnchor)+( 18mm,0)$) {$y_i$};
\node[subnode] (yq)    at ($(childAnchor)+( 36mm,0)$) {$y_q$};

\node[font=\Large,below=0.5mm of y1] {$z_1$};
\node[font=\Large,below=0.5mm of y2] {$z_2$};
\node[font=\Large,below=0.5mm of yi] {$z_i$};
\node[font=\Large,below=0.5mm of yq] {$z_q$};

\foreach \c in {y1,y2,ydots,yi,yq}{
  \draw[aliceC,thick] (alice.south) -- (\c.north);
}

\node[bobbox, right=18mm of alice,
      align=left,minimum width=84mm] (bob)
  {\textbf{\textcolor{bobC}{Step 2 — Bob}}\\[1pt]
   Solve dual LP for $\mathrm{fbs}_z\!\bigl(M_D(\hat x,y,\cdot)\bigr)$:\\[1pt]
   $\displaystyle \min \sum_{j=1}^{q} t_j \;\;\text{s.t.}\;\; t_j\ge 0,\;\sum_{j\in B}t_j\ge 1\;\forall B\in S_z$\\[2pt]
   \textbf{if} $\;\mathrm{fbs}_z>\rho$\;\;$\Rightarrow$\;\;\textcolor{bobC}{\textbf{Alice loses}}\\
   \textbf{else} sample $i$ with $p_i=\dfrac{t_i}{\sum_j t_j}$};

\draw[arr,bobC] (alice.east) -- (bob.west);

\node[chosen,subnode] at (yi) {$y_i$};

\draw[arr,accentC,line width=1pt]
   (bob.south) .. controls +(0,-12mm) and +(8mm,4mm) .. (yi.east);

\draw[decorate,decoration={brace,amplitude=4pt,mirror},thick]
  ($(y1.south west)+(-1mm,-9mm)$) -- ($(yq.south east)+(1mm,-9mm)$)
  node[midway,below=4pt,lbl] {level $k-1$ — recursion continues on the sampled child $y_i$};

\node[font=\Large\bfseries,above=4mm of alice,xshift=10mm]
      {One round of FBS Debate (at level $k$)};

\coordinate (sepLeft)  at ($(xhat.south west) + (-14mm,-63mm)$);
\coordinate (sepRtmp)  at ($(bob.south east)  + ( 4mm,  0)$);
\coordinate (sepRight) at (sepRtmp |- sepLeft);
\draw[dashed,gray,thick] (sepLeft) -- (sepRight);

\node[pnode,minimum width=16mm] (rd) at ($(xhat.south)+(28mm,-74mm)$)
   {$x=\hat x^{d}$};
\node[lbl,left=2mm of rd] {level $d$};

\node[subnode]            (rdm1a) at ($(rd)+(-30mm,-22mm)$) {};
\node[subnode]            (rdm1b) at ($(rd)+(  0mm,-22mm)$) {$\cdots$};
\node[chosen,subnode]     (rdm1c) at ($(rd)+( 30mm,-22mm)$) {$\hat x^{d-1}$};
\draw[thick] (rd) -- (rdm1a);
\draw[thick] (rd) -- (rdm1b);
\draw[thick,accentC,line width=1.2pt] (rd) -- (rdm1c);
\node[lbl,left=2mm of rdm1a,xshift=-2mm] {level $d{-}1$};

\node[font=\Large] (dots1) at ($(rdm1c)+(0,-12mm)$) {$\vdots$};
\node[lbl,left=2mm of dots1,xshift=-44mm] {$\;\vdots$};

\node[chosen,subnode] (r1) at ($(rdm1c)+(0,-26mm)$) {$\hat x^{1}$};
\node[lbl,left=2mm of r1,xshift=-42mm] {level $1$};

\node[subnode]        (l0a) at ($(r1)+(-16mm,-19mm)$) {};
\node[subnode]        (l0b) at ($(r1)+(  0mm,-19mm)$) {$\cdots$};
\node[chosen,subnode] (l0c) at ($(r1)+( 16mm,-19mm)$) {$\hat x^{0}$};
\draw[thick] (r1) -- (l0a);
\draw[thick] (r1) -- (l0b);
\draw[thick,accentC,line width=1.2pt] (r1) -- (l0c);
\node[lbl,left=2mm of l0a,xshift=-26mm] {level $0$ (leaf)};

\node[checkbox, right=18mm of l0c, align=left,text width=72mm]
  (oracle) {\textbf{\textcolor{checkC}{Leaf check}}\\[1pt]
            Query oracle $O(q_{\hat x^{0}})$\\
            \textbf{if} $A(\hat x^{0})=O(q_{\hat x^{0}})$ $\Rightarrow$ \textbf{\textcolor{aliceC}{Alice wins}}\\
            \textbf{else} $\Rightarrow$ \textbf{\textcolor{bobC}{Bob wins}}};
\draw[arr,checkC] (l0c.east) -- (oracle.west);

\node[bobbox,align=left,right=22mm of rd,xshift=38mm,yshift=-10mm,minimum width=88mm]
   (annot) {\textbf{Recursive structure}\\[2pt]
            At each level $k\!\downarrow\! k{-}1$ the\\
            single round above is repeated:\\[1pt]
            \textcolor{aliceC}{Alice} decomposes \;\;$\to$\;\;
            \textcolor{bobC}{Bob} runs FBS check\\
            \textcolor{bobC}{Bob} samples one child as $\hat x^{k-1}$%
            };

\coordinate (braceX) at ($(l0c.east)+(3mm,-4mm)$);
\draw[decorate,decoration={brace,amplitude=5pt},thick,gray]
  (braceX |- rd.north) -- ($(braceX)+(0,-3mm)$)
  node[midway,right=6pt,lbl,gray,align=left] {$d$ levels of\\recursion};

\end{tikzpicture}%
  }%
  \caption{The FBS debate protocol.
    \emph{Top:} one round at level $k$.
    \emph{Bottom:} the recursion for $d$ levels.}
  \label{fig:fbs-protocol}
\end{figure}

\begin{proof}
The honest strategy takes $A(x)=L(x)$, decomposes exactly as $\dT$ at every node, and sets $z$ to be the correct answers at every decomposition step.
Then (i) the consistency check always passes by the solvability condition; (ii) the stability check always passes due to the FBS condition; and (iii) at depth $0$ the oracle agrees with Alice. An induction on depth $k=d,d-1,\dots,0$ shows that the run never triggers a losing condition, hence Alice wins with probability $1$.

It is easy to check that this strategy takes $O\big(d\cdot (T_D + q \cdot T)\big)$ time and $O(d\cdot q \cdot Q)$ queries to $\horacle$.
\end{proof}

We then show that whenever Alice answers the top-level problem incorrectly, the protocol will catch her mistake with probability $\rho^{-d}$, where $d$ is the depth of the recursive decomposition and $\rho$ is the fractional block sensitivity. This immediately implies that by running the protocol $O(\rho^d\log n)$ times, any incorrect answer by Alice will be caught with probability at least $1 - 1/\poly(n)$. Viewed from the lens of practical AI debate, this represents a trade-off between allowing highly sensitive arguments and the inference-time computation required to verify them.

\begin{restatable}{theorem}{ProtocolSoundness} %
\label{thm:soundness}
    Suppose $A(x)\neq L(x)$. Then, regardless of Alice's strategy,
    \[
        \Pr[\textup{Alice loses}] \;\ge\; \rho^{-d}.
    \]
\end{restatable}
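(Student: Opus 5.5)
The plan is to prove, by induction on the level $k$, the stronger statement that at any node at level $k$ where Alice's claimed answer $A(\hat{x})$ differs from $L(\hat{x})$, Bob wins the remaining debate with probability at least $\rho^{-k}$, whatever Alice does. The theorem is the case $k=d$ at the root, since $A(x)\neq L(x)$.

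\emph{Base case $k=0$.} The current sub-problem $\hat{x}$ is a leaf. By the leaf condition of the decomposition tree we have $L(\hat{x})=\horacle(q_{\hat{x}})$. If $A(\hat{x})\neq L(\hat{x})$, Alice's claim disagrees with the oracle, so Bob wins with probability $1=\rho^{0}$.

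\emph{Inductive step.} Fix a level $k\ge 1$ node $\hat{x}$ with $A(\hat{x})\neq L(\hat{x})$, and suppose Alice outputs $y,z$. If $M_D(\hat{x},y,z)\neq A(\hat{x})$, or if $\operatorname{fbs}_z(M_D(\hat{x},y,\cdot))>\rho$, Alice loses immediately and we are done. (If $M_D$ rejects $(x,y)$ she also loses, by the convention in the preliminaries.) Otherwise, let $z^*=(L(y_1),\dots,L(y_q))$ be the true answers and let $B=\{i: z_i\neq z^*_i\}$ be the set of lies. The aggregator property gives $M_D(\hat{x},y,z^*)=L(\hat{x})\neq A(\hat{x})=M_D(\hat{x},y,z)$. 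Hence $z^*=z^{B}$ and $B$ is a sensitive block of $f=M_D(\hat{x},y,\cdot)$ at $z$; in particular $B\neq\emptyset$.

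Let $t$ be the optimal dual solution. Dual feasibility gives $\sum_{i\in B}t_i\ge 1$, and strong duality gives $\sum_j t_j=\operatorname{fbs}_z(f)\le\rho$. Therefore
\[
\sum_{i\in B}p_i=\frac{\sum_{i\in B}t_i}{\sum_j t_j}\ge \frac{1}{\rho}.
\]
When Bob picks some $i\in B$, the recursion continues at level $k-1$ on $y_i$ with $A(y_i)=z_i\neq L(y_i)$. By the inductive hypothesis, applied to Alice's arbitrary continuation strategy, Bob wins from there with probability at least $\rho^{-(k-1)}$. Conditioning on Bob's choice, and discarding the nonnegative contribution from $i\notin B$, we get
\[
\Pr[\text{Bob wins}]\ge \sum_{i\in B}p_i\,\rho^{-(k-1)}\ge \rho^{-k}.
\]

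\emph{Points needing care.} The main subtlety is identifying the set of lies as a sensitive block, which is what connects the FBS dual to Bob's catching probability. Beyond that, two details need attention. First, $\sum_j t_j>0$, so the $p_i$ are well defined; this follows because a sensitive block exists. Second, the induction must quantify over arbitrary (possibly adaptive and randomized) Alice strategies. This is handled by making the inductive claim hold for every strategy conditioned on the history, which is valid because Bob's randomness at each level is fresh.
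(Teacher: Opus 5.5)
Your proof is correct and is essentially the paper's argument. The paper makes the same two key moves. It identifies the set of Alice's lies $B=\{i: z_i\neq L(y_i)\}$ as a sensitive block at $z$, and it uses dual feasibility together with the passed stability check to get $\Pr[I\in B]\ge 1/\rho$. The only difference is packaging: the paper chains conditional probabilities of the events $W_k$ forward from the root, while you induct backward on the level.
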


\ifinlineproofs
\ifinlineproofs\else
\section{Proof of \Cref{thm:soundness}}\label{sec: soundness}

\ProtocolSoundness*
\fi

\begin{proof}
Let $\hat{x}^k$ be the problem at level $k \in \{d, d-1, \ldots, 0\}$ during the execution of the FBS debate with $\hat{x}^d:=x$.
At level $k$, let
\[
a_k:=A(\hat{x}^k),\qquad g_k:=L(\hat{x}^k),
\]
and let $E_k$ be the event that $a_k\neq g_k$. For $k \in \{d, \ldots, 0\}$, define the event
\begin{align*}
W_k:={}&\{\text{Alice loses before the recursion reaches level }k\}\\
&\cup\{\text{the recursion reaches level }k\text{ and }E_k\text{ holds}\}.
\end{align*}
Since $A(x)\neq L(x)$, we have $\Pr[W_d]=1$. We claim that for each $k\ge1$, conditioned on any transcript through level $k$ for which $W_k$ holds,
\begin{equation}\label{eq:step}
\Pr\!\big[W_{k-1}\mid \text{transcript through level }k\big]\ \ge\ \frac{1}{\rho}.
\end{equation}
If Alice has already lost, or if she fails the consistency or the stability check at level $k$, then $W_{k-1}$ holds trivially. So assume the recursion reaches level $k$ with $E_k$ and both checks pass. Let $y$ and $z$ be the sub-problems and answers proposed by Alice at level $k$, and let $\vg \in \zo^q$, $\vg_i := L(y_i)$, be the correct answers to $y$. Define $B:=\{i\in[q]:z_i\neq \vg_i\}$. Then $B\neq\varnothing$: otherwise $z=\vg$, and the consistency check would give $a_k=M_D(\hat{x}^k,y,z)=g_k$, contradicting $E_k$. Moreover, $B$ is sensitive for $M_D(\hat{x}^k,y,\cdot)$ at $z$, because $z^B=\vg$ and hence
\[
M_D(\hat{x}^k,y,z^B)=M_D(\hat{x}^k,y,\vg)=g_k\neq a_k=M_D(\hat{x}^k,y,z).
\]
Let $t$ be the optimal dual solution computed by Bob in the stability check. By dual feasibility, $\sum_{i\in B'} t_i\ge1$ for every sensitive block $B'$, and since the check passed, $\sum_i t_i=\operatorname{fbs}_{z}\!\big(M_D(\hat{x}^k,y,\cdot)\big)\le\rho$. Thus the index $I$ chosen by Bob satisfies
\[
\Pr[I\in B]=\sum_{i\in B}\frac{t_i}{\sum_j t_j}\ \ge\ \frac{1}{\sum_j t_j}\ \ge\ \frac{1}{\rho}.
\]
If $I\in B$, then $a_{k-1}=z_I\neq \vg_I=g_{k-1}$, i.e., $E_{k-1}$ occurs and $W_{k-1}$ holds. This proves \eqref{eq:step}.

As $W_k$ is determined by the transcript through level $k$, averaging \eqref{eq:step} over all transcripts realizing $W_k$ gives $\Pr[W_{k-1}]\ge\Pr[W_{k-1}\cap W_k]\ge\rho^{-1}\Pr[W_k]$, and hence
\[
\Pr[W_0]\ \ge\ \rho^{-d}\,\Pr[W_d]\ =\ \rho^{-d}.
\]
Finally, $W_0$ implies that Alice loses: either she has already lost, or the recursion reaches level $0$ and her claimed answer disagrees with $\horacle$. Hence $\Pr[\textup{Alice loses}]\ge\rho^{-d}$.
\end{proof}

\else
We refer the reader to \Cref{sec: soundness} for the proof of \Cref{thm:soundness}.
\fi

\section{Lower Bound}
\label{sec:lower-bound}

In this section, we prove an instance-wise lower bound for debate protocols in the recursive decomposition framework. The bound is governed by the fractional block sensitivity of the local aggregation functions. %

\paragraph{Setup.}
Fix a language $L$, a one-step aggregator $M_D$,
an input $x$, and a depth-$d$ visible tree $\mathcal{T}$ for $x$ with respect to $M_D$. For every internal node $u$, let $y_u=(y_{u,1},\ldots,y_{u,q})$ be its children and let
\[
\phi_u(z) \coloneqq M_D(\hat x_u,y_u,z)
\]
be its local aggregator. For any oracle $\horacle$, define the value of a node by bottom-up evaluation:
\[
\mathrm{val}_{\horacle}(u)\coloneqq
\begin{cases}
\horacle(q_u) & \text{if }u\text{ is a leaf},\\
\phi_u\bigl(\mathrm{val}_{\horacle}(u_1),\ldots,\mathrm{val}_{\horacle}(u_q)\bigr) & \text{if }u\text{ is internal},
\end{cases}
\]
where $u_1,\ldots,u_q$ are the children of $u$. For the lower bound, we treat $\mathcal T$ as a syntactic transcript whose truth value under $\horacle$ is $\mathrm{val}_{\horacle}(\mathrm{root})$. %

We assume that the leaf queries $q_\ell$ are pairwise distinct. This assumption is needed for a non-trivial lower bound: for example, if all $q$ leaves of a $q$-ary AND root query the same string, then the decomposition tree $\mathcal{T}$ computes only a one-bit identity function. %

\paragraph{Adaptive checker.}
To strengthen the lower bound, we assume that the second debater Bob is honest and assists the judge, while Alice may use an arbitrary strategy. We model Bob and the judge together as a single \emph{adaptive checker}. Specifically, an $m$-query adaptive checker $\mathcal V$ for $\mathcal{T}$ is a randomized algorithm that receives the full visible tree $\mathcal{T}$, makes at most $m$ adaptive queries to $\horacle$, may communicate arbitrarily with Alice, and finally outputs a bit intended to equal $L(x)$. Giving the checker an honest Bob and the full visible tree can only make its task easier, so a lower bound in this model also applies to the original debate protocol.

Fix a reference oracle $\horacle_Y$. For any alternative oracle $\horacle_N$, let $\mathsf{Acc}_Y$ and $\mathsf{Acc}_N$ denote the events that $\mathcal V$ accepts under $\horacle_Y$ and $\horacle_N$, respectively, with Alice using the same fixed strategy in both executions. When the two oracles give different values at the root, $|\Pr[\mathsf{Acc}_Y]-\Pr[\mathsf{Acc}_N]|$ measures the checker's oracle-distinguishing advantage. We show that if every local aggregator has fractional block sensitivity at least $\rho$ at the child values induced by $\horacle_Y$, then some alternative oracle changes the root value while giving any $m$-query checker distinguishing advantage at most $m\rho^{-d}$.

\begin{restatable}{theorem}{AdaptiveLowerBound}\label{thm:adaptive-query-lower-bound}
Let $\rho \ge 1$, and suppose every internal node $u$ satisfies
\[
\operatorname{fbs}_{\vz^u}(\phi_u)\ge \rho,
\]
where $\vz^u\in\{0,1\}^q$ is the vector of child values induced by $\horacle_Y$.
Then for every $m$-query adaptive checker $\mathcal V$ for $\mathcal{T}$, there is an oracle $\horacle_N$ such that
\[
\mathrm{val}_{\horacle_N}(\mathrm{root})\neq \mathrm{val}_{\horacle_Y}(\mathrm{root})
\]
and
\[
\left|\Pr[\mathsf{Acc}_Y]-\Pr[\mathsf{Acc}_N]\right|
\le \min\{1,m\rho^{-d}\}.
\]
\end{restatable}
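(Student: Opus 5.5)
The plan is a randomized hybrid argument. I will build a \emph{distribution} $\mathcal D$ over alternative oracles $\horacle_N$ such that every oracle in its support flips the root value, while every fixed leaf query is altered with probability at most $\rho^{-d}$. An averaging argument then yields a single good $\horacle_N$.

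\textbf{Step 1: the local flipping distribution.} At each internal node $u$ I use the primal LP. Take an optimal primal solution $w^u$ of $\operatorname{fbs}_{\vz^u}(\phi_u)$, with value $\sum_B w^u_B \ge \rho$. If the value exceeds $\rho$, I scale down so that the total weight is exactly $\rho$; the per-coordinate constraints remain satisfied. Normalizing gives a distribution $\mu_u$ on sensitive blocks $B\in\mathcal S_{\vz^u}(\phi_u)$ in which, for every child index $i$,
\[
\Pr_{B\sim\mu_u}[i\in B]=\frac{1}{\rho}\sum_{B\ni i} w^u_B \le \frac{1}{\rho}.
\]

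\textbf{Step 2: the recursive sampling process.} Starting at the root, I mark it ``to flip.'' For each marked internal node $u$, I sample $B\sim\mu_u$ and mark the children $u_i$ with $i\in B$. Unmarked nodes are left alone. I then define $\horacle_N$ by negating $\horacle_Y(q_\ell)$ at every marked leaf $\ell$ and copying $\horacle_Y$ elsewhere. This is well defined because the leaf queries are pairwise distinct.

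By bottom-up induction, every marked node $u$ has $\mathrm{val}_{\horacle_N}(u)\ne\mathrm{val}_{\horacle_Y}(u)$, and every unmarked node keeps its value:
\begin{itemize}
\item For an unmarked node, no descendant is marked, so its value is unchanged.
\item For a marked internal node, its child-value vector under $\horacle_N$ is exactly $(\vz^u)^B$. Since $B$ is sensitive at $\vz^u$, $\phi_u$ flips.
\end{itemize}
Consequently the root value always flips. A leaf $\ell$ is marked only if every edge on its root-to-leaf path is selected. These $d$ selections are made independently conditioned on the ancestor being marked, and each succeeds with probability at most $1/\rho$. Hence
\[
\Pr[\ell\text{ marked}]\le\rho^{-d}.
\]

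\textbf{Step 3: adaptive query averaging.} This is the step I expect to be the main obstacle, because the checker is adaptive and Alice may interact. Fix the checker's randomness and Alice's strategy, and run the execution under $\horacle_Y$. This determines a sequence of at most $m$ queries; if Alice's strategy is randomized, I fix her coins as well and handle them in the same averaging. The executions under $\horacle_Y$ and $\horacle_N$ coincide unless $\horacle_N$ differs from $\horacle_Y$ on one of these $m$ queried strings. This holds because the first point of divergence must be a query answered differently, and every query up to that point is the same as in the $\horacle_Y$ run. Queries that do not correspond to leaves of $\mathcal T$ are never altered.

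By a union bound over the $m$ queries and Step 2,
\[
\Pr_{\horacle_N\sim\mathcal D}[\text{executions differ}]\le m\rho^{-d}
\]
for each fixed choice of coins. Averaging over the coins gives
\[
\mathbb E_{\horacle_N\sim\mathcal D}\,\bigl|\Pr[\mathsf{Acc}_Y]-\Pr[\mathsf{Acc}_N]\bigr| \le m\rho^{-d},
\]
where the coupling bound on the total variation distance of the outputs justifies moving the absolute value inside. Some $\horacle_N$ in the support therefore achieves at most the average. It flips the root by Step 2, and the bound $\min\{1,\cdot\}$ is trivial.

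The delicate points are two. First, the coupling must use the $\horacle_Y$-run's query set, which is independent of $\horacle_N$; this is what makes the union bound legitimate despite adaptivity. Second, the averaged quantity must be the absolute difference, and I handle this by bounding it through the coupling rather than by averaging the signed difference.
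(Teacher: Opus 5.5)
Your argument is correct, but it composes on the primal side of the LP, whereas the paper composes on the dual side.

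\textbf{The paper's route.} It fixes the checker first and takes its normalized query frequencies $p_s\propto\Pr_{\horacle_Y}[\mathcal V\text{ queries }s]$. It then picks a sensitive block of the collapsed function $F_{\mathcal T}$ with $p$-mass at most $1/\rho_{\mathcal T}$, using a light-block lemma that is itself an averaging over an optimal primal solution. Flipping this block gives one explicit $\horacle_N$ tailored to $\mathcal V$. Separately, it proves $\rho_{\mathcal T}\ge\rho^d$ by showing, recursively from the root, that every distribution on the leaves admits a sensitive block of mass at most $\rho^{-d}$. LP duality then converts this into an FBS bound.

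\textbf{Your route.} You multiply the local primal distributions into a single checker-independent distribution $\mathcal D$ over sensitive blocks of the whole tree. All of its leaf marginals are at most $\rho^{-d}$, so $\rho^d\cdot\mathcal D$ is itself a feasible primal solution of value $\rho^d$ for $F_{\mathcal T}$ at $\alpha$. You then defer all averaging to the end, via the coupling and Fubini.

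\textbf{What each buys.} Your route never needs strong duality. It also produces a universal hard distribution: for every $m$-query checker, the expected advantage against a random $\horacle_N\sim\mathcal D$ is at most $m\rho^{-d}$. The paper's route yields an explicit $\horacle_N$. It also gives a modular intermediate bound $m/\rho_{\mathcal T}$ in terms of the global sensitivity, which may exceed $\rho^d$. Your averaging step would recover that bound too if you started from an optimal primal solution for $F_{\mathcal T}$ instead of the product one.
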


\ifinlineproofs
\ifinlineproofs
The proof proceeds in two steps. First, we bound the checker's distinguishing advantage in terms of the fractional block sensitivity of the Boolean function computed by the entire tree. We then show that this global sensitivity is at least $\rho^d$ when every local sensitivity is at least $\rho$. We begin by defining the global function and its sensitivity.
\else
\section{Proof of \Cref{thm:adaptive-query-lower-bound}}
\label{sec:lower-bound-app}
We retain the setup of \Cref{sec:lower-bound}, including the assumption that the leaf queries are pairwise distinct. The proof proceeds in two steps. First, we bound the checker's distinguishing advantage in terms of the fractional block sensitivity of the Boolean function computed by the entire tree. We then show that this global sensitivity is at least $\rho^d$ when every local sensitivity is at least $\rho$. We begin by defining the global function and its sensitivity.
\fi

\paragraph{Notation.}

For a leaf $\ell$, let $q_\ell$ be its base-case oracle query.
Let
\[
Q(\mathcal{T})\coloneqq \{q_\ell:\ell\text{ is a leaf of }\mathcal{T}\}
\]
be the set of leaf-query strings.
The decomposition tree $\mathcal{T}$ induces a Boolean function
\[
F_{\mathcal{T}}:\{0,1\}^{Q(\mathcal{T})}\to\{0,1\}
\]
by assigning each leaf $\ell$ the variable indexed by $q_\ell$ and evaluating the tree bottom-up. %

The reference oracle $\horacle_Y$ induces an assignment $\alpha\in\{0,1\}^{Q(\mathcal{T})}$, defined by $\alpha_{q_\ell} \coloneqq \horacle_Y(q_\ell)$ for every leaf $\ell$. Let
\[\rho_{\mathcal{T}} \coloneqq \operatorname{fbs}_{\alpha}(F_{\mathcal{T}}),\] that is, the fractional block sensitivity of the whole decomposition tree $\mathcal{T}$ at the reference assignment $\alpha$.

We first prove an auxiliary lemma on fractional block sensitivity.

\begin{lemma}[A light sensitive block]\label{lem:hard-block-query-distribution}
Let $f:\{0,1\}^{I}\to\{0,1\}$ be a Boolean function on a finite coordinate set $I$, let $x\in\{0,1\}^{I}$, and let $R=\operatorname{fbs}_x(f)>0$. For every distribution $p$ on $I$, there is a sensitive block $H\in\mathcal{S}_x(f)$ such that
\[
\sum_{i\in H}p_i\le \frac1R.
\]
\end{lemma}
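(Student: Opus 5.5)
The plan is to argue by contradiction, using the dual linear program from \Cref{sec:fbs-def}. Suppose that for some distribution $p$ on $I$, every sensitive block $H\in\mathcal{S}_x(f)$ satisfies $\sum_{i\in H}p_i>1/R$. The sensitive family $\mathcal{S}_x(f)$ is finite and, since $R>0$, nonempty. Hence the minimum $\mu:=\min_{H\in\mathcal{S}_x(f)}\sum_{i\in H}p_i$ is attained and strictly exceeds $1/R$.

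Next I would rescale $p$ into a dual feasible point. Define $t_i:=p_i/\mu$ for each $i\in I$. Then $t_i\ge 0$, and every sensitive block $H$ satisfies $\sum_{i\in H}t_i=\mu^{-1}\sum_{i\in H}p_i\ge 1$. So $t$ is feasible for the dual program. Its objective value is $\sum_i t_i=1/\mu<R$, because $p$ is a probability distribution.

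This contradicts strong duality: the dual optimum equals $\operatorname{fbs}_x(f)=R$, so no feasible point can have value below $R$. Therefore some sensitive block $H$ has $\sum_{i\in H}p_i\le 1/R$. One could instead argue directly from the primal by averaging over an optimal primal solution $w$. Normalizing $w$ to a distribution over blocks gives
\[
\mathbb{E}_{H}\Bigl[\sum_{i\in H}p_i\Bigr]=\frac{1}{R}\sum_i p_i\sum_{H\ni i}w_H\le \frac{1}{R},
\]
so some block in the support meets the bound. I would mention either route.

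There is no real obstacle here. The only care points are that $\mathcal{S}_x(f)$ is nonempty, which is guaranteed by $R>0$, and that $\mu>0$ so the rescaling is well defined. The latter holds because $\mu>1/R>0$.
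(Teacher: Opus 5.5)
Your proof is correct. Your primary argument goes through the dual. You rescale $p$ by $1/\mu$ into a feasible point of the fractional hitting-set program with value $1/\mu<R$, which contradicts the dual optimum being $R$.

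The paper instead uses exactly the alternative you sketch at the end. It takes an optimal primal solution $w$, samples a block with probability $w_H/R$, and shows that the expected $p$-mass of the sampled block is at most $1/R$.

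The two routes are the same LP duality seen from opposite sides. The paper's averaging computation is weak duality written out for the specific pair $(w,p)$. That makes it self-contained, and it also places a light block inside the support of $w$. Your rescaling makes explicit that the lemma just restates that every dual-feasible value is at least $R$.

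Your contradiction needs only weak duality, not strong duality: every dual-feasible value is at least the primal optimum $R$. Your two care points, $\mathcal{S}_x(f)\neq\varnothing$ and $\mu>0$, are exactly what the rescaling needs to be well defined.
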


\begin{proof}
Let $\{w_H\}_{H\in\mathcal{S}_x(f)}$ be an optimal primal solution for $\operatorname{fbs}_x(f)$. Thus $\sum_H w_H=R$ and $\sum_{H\ni i}w_H\le1$ for every $i\in I$. Sample a sensitive block $\mathbf H$ with probability proportional to $w_H$. Then
\[
\mathbb E\left[\sum_{i\in\mathbf H}p_i\right]
=\sum_{i\in I}p_i\frac{\sum_{H\ni i}w_H}{\sum_H w_H}
\le \sum_{i\in I}p_i\frac1R
=\frac1R.
\]
Therefore, some block $H$ in the support satisfies $\sum_{i\in H}p_i\le 1/R$.
\end{proof}

Recall that $\mathsf{Acc}_Y$ and $\mathsf{Acc}_N$ denote the events that the adaptive checker $\mathcal V$ accepts under $\horacle_Y$ and $\horacle_N$, respectively. We show that $\rho_{\mathcal{T}}$ controls the maximal gap between $\Pr[\mathsf{Acc}_Y]$ and $\Pr[\mathsf{Acc}_N]$.

\begin{theorem}[Collapsed-function lower bound]\label{thm:collapsed-query-lower-bound}
Assume $\rho_{\mathcal{T}}>0$. For every $m$-query adaptive checker $\mathcal V$ for $\mathcal{T}$, there is an oracle $\horacle_N$ such that
\[
\mathrm{val}_{\horacle_N}(\mathrm{root})\neq \mathrm{val}_{\horacle_Y}(\mathrm{root})
\]
and
\[
\left|\Pr[\mathsf{Acc}_Y]-\Pr[\mathsf{Acc}_N]\right|
\le \min\left\{1,\frac{m}{\rho_{\mathcal{T}}}\right\}.
\]
\end{theorem}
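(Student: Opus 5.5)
Fix the adaptive checker $\mathcal V$ together with Alice's strategy, and consider the execution under the reference oracle $\horacle_Y$. Queries outside $Q(\mathcal T)$ will play no role, because we will only build oracles that agree with $\horacle_Y$ off $Q(\mathcal T)$. For each leaf string $s\in Q(\mathcal T)$, let $N_s$ be the number of times $\mathcal V$ queries $s$ in this $\horacle_Y$-execution; this is a random variable over the coins of $\mathcal V$ and Alice. Set $p_s := \mathbb E[N_s]/m$. Since at most $m$ queries are made, $\sum_s p_s\le 1$. Pad $p$ with leftover mass on an arbitrary coordinate, which only increases entries, so that it is a distribution on $Q(\mathcal T)$.

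Next apply \Cref{lem:hard-block-query-distribution} to $f=F_{\mathcal T}$, $x=\alpha$, $R=\rho_{\mathcal T}>0$ and this distribution $p$. It yields a sensitive block $H\in\mathcal S_\alpha(F_{\mathcal T})$ with $\sum_{s\in H}p_s\le 1/\rho_{\mathcal T}$. Define $\horacle_N$ to agree with $\horacle_Y$ everywhere except on the strings in $H$, where it is flipped. Because the leaf queries are pairwise distinct, the leaf assignment induced by $\horacle_N$ is exactly $\alpha^H$. Bottom-up evaluation then gives $\mathrm{val}_{\horacle_N}(\mathrm{root})=F_{\mathcal T}(\alpha^H)\neq F_{\mathcal T}(\alpha)=\mathrm{val}_{\horacle_Y}(\mathrm{root})$.

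It remains to bound the distinguishing advantage by a hybrid/coupling argument. Run both executions with the same random coins for $\mathcal V$ and Alice. The two executions are identical step by step until the first query to a string in $H$, since Alice's messages depend only on the transcript and her coins. Let $\mathsf{Hit}$ be the event that the $\horacle_Y$-execution ever queries a string of $H$. Outside $\mathsf{Hit}$ the two runs produce the same output, so
\[
|\Pr[\mathsf{Acc}_Y]-\Pr[\mathsf{Acc}_N]|\le \Pr[\mathsf{Hit}]\le \mathbb E\Bigl[\sum_{s\in H}N_s\Bigr]=m\sum_{s\in H}p_s\le \frac{m}{\rho_{\mathcal T}}.
\]
The bound by $1$ is trivial, which gives the stated minimum.

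The main subtlety is the adaptivity, including interaction with Alice. The point to get right is that the query distribution $p$ is defined from the $\horacle_Y$-execution alone, so it does not depend on $H$. The coupling up to the first hit then justifies measuring $\Pr[\mathsf{Hit}]$ entirely in the $Y$-world. One must also note that Alice's "same fixed strategy" is a function of the transcript and does not see the oracle directly, or at least that any oracle access of hers is simulated through the same coupled coins. Otherwise the two runs could diverge before $\mathcal V$ touches $H$.
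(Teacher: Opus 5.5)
Your proof is correct and is essentially the paper's argument: build a query-mass distribution from the $\horacle_Y$-execution, apply \Cref{lem:hard-block-query-distribution} to get a light sensitive block $H$, flip $\horacle_Y$ on $H$, and couple the two runs until the first query to $H$. The only differences are cosmetic. You use expected query counts normalized by $m$ and padded, where the paper uses query probabilities $a_s$ normalized by $\Gamma$; your division by $m$ needs the same trivial separate treatment of $m=0$ that the paper gives to $\Gamma=0$. You also usefully make explicit the assumption, implicit in the paper's coupling, that Alice's fixed strategy does not itself query the oracle.
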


\begin{proof}
For any leaf query $s\in Q(\mathcal{T})$, let
\[
a_s=\Pr_{\horacle_Y}[\mathcal V\text{ queries }s].
\]
Then
\[
\Gamma:=\sum_{s\in Q(\mathcal{T})}a_s
=\mathbb E_{\horacle_Y}\bigl[\#\{s\in Q(\mathcal{T}):\mathcal V\text{ queries }s\}\bigr]
\le m,
\]
since the number of distinct leaf-query strings queried is at most the total number of oracle queries.

If $\Gamma>0$, set $p_s=a_s/\Gamma$; if $\Gamma=0$, take any distribution $p$ on $Q(\mathcal{T})$. By \Cref{lem:hard-block-query-distribution}, there is a sensitive block $H$ for $F_{\mathcal{T}}$ at $\alpha$ with $p(H):=\sum_{s\in H}p_s\le1/\rho_{\mathcal{T}}$. Define $\horacle_N$ by flipping $\horacle_Y$ on $H$ and leaving all other oracle strings unchanged. Since $H$ is sensitive, the root value flips. Moreover,
\[
\sum_{s\in H}a_s\le \frac{\Gamma}{\rho_{\mathcal{T}}}\le \frac{m}{\rho_{\mathcal{T}}},
\]
with the case $\Gamma=0$ being trivial.

Couple the two executions with the same random tapes for $\mathcal V$ and Alice. Since the oracles agree outside $H$, the two executions have identical transcripts until $\mathcal V$ first queries a string in $H$. Therefore the outputs can differ only if $\mathcal V$ queries some string in $H$ under $\horacle_Y$. By the union bound,
\[
\left|\Pr[\mathsf{Acc}_Y]-\Pr[\mathsf{Acc}_N]\right|
\le \Pr_{\horacle_Y}[\mathcal V\text{ queries }H]
\le \sum_{s\in H}a_s
\le \frac{m}{\rho_{\mathcal{T}}}.
\]
The left-hand side is also at most $1$.
\end{proof}

We next lower bound $\rho_{\mathcal{T}}$ in terms of the local fractional block sensitivities at the internal nodes.

\begin{lemma}\label{lem:independent-leaf-composition}
Let $\rho\ge1$, and suppose every internal node $u$ satisfies
\[
\operatorname{fbs}_{\vz^u}(\phi_u)\ge \rho,
\]
where $\vz^u\in\{0,1\}^q$ is the vector of child values induced by $\horacle_Y$.
Then $\rho_{\mathcal{T}}\ge \rho^d$.
\end{lemma}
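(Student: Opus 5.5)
We prove the lemma by induction on the depth, building a feasible primal solution for $\operatorname{fbs}_\alpha(F_{\mathcal T})$ by composing local primal solutions. For each node $u$, let $F_u$ be the Boolean function computed by the subtree rooted at $u$. It acts on the leaf-query variables of that subtree; these are distinct because leaf queries are pairwise distinct. Let $\alpha^u$ be the restriction of $\alpha$ to these variables. We claim $\operatorname{fbs}_{\alpha^u}(F_u)\ge \rho^{k}$ for every node $u$ at level $k$. The base case $k=0$ is a single leaf, where $F_u$ is the identity on one variable. Then $\{q_u\}$ is a sensitive block of weight $1$, so $\operatorname{fbs}=1=\rho^0$.

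For the inductive step, fix an internal node $u$ at level $k$ with children $u_1,\dots,u_q$. Take an optimal primal solution $\{w_B\}$ for $\operatorname{fbs}_{\vz^u}(\phi_u)$, with total weight at least $\rho$. For each child $j$, take a primal solution $\{w^{(j)}_H\}$ for $F_{u_j}$ at $\alpha^{u_j}$ with total weight $R_j\ge\rho^{k-1}$. Normalize it into a distribution $\pi_j(H)=w^{(j)}_H/R_j$, which puts marginal mass at most $1/R_j\le\rho^{-(k-1)}$ on each leaf variable. Given a sensitive block $B$ of $\phi_u$ and a choice of blocks $H_j\in\mathcal S_{\alpha^{u_j}}(F_{u_j})$ for $j\in B$, form $H=\bigcup_{j\in B}H_j$. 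These sets lie in disjoint variable sets, so flipping $H$ flips exactly the child values indexed by $B$ and leaves the other children unchanged. Hence $H$ flips $\phi_u$, and $H$ is a sensitive block of $F_u$ at $\alpha^u$. Assign it weight
\[
W_H=\rho^{k-1}\, w_B \prod_{j\in B}\pi_j(H_j),
\]
summing the contributions if the same $H$ arises in several ways. Here we use $R_j\ge\rho^{k-1}$ only through the marginal bound. The total weight is $\rho^{k-1}\sum_B w_B\ge\rho^k$.

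Next we check the packing constraint. Fix a leaf variable $s$ in the subtree of child $j$. The total weight of blocks containing $s$ is at most
\[
\rho^{k-1}\sum_{B\ni j} w_B \cdot \Pr_{\pi_j}[s\in H_j],
\]
because the other factors are probability distributions that sum to $1$. The local constraint gives $\sum_{B\ni j}w_B\le 1$, and the marginal bound gives $\Pr_{\pi_j}[s\in H_j]\le \rho^{-(k-1)}$. The product is therefore at most $1$, so the constraint holds. This completes the induction, and applying it at the root (level $d$) gives $\rho_{\mathcal T}\ge\rho^d$.

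The main obstacle is the bookkeeping in the composition step. The disjointness of leaf variables across children is exactly what makes the unions sensitive blocks of $F_u$, and what lets the marginals factor. The non-uniform sizes $R_j$ are handled by using only the bound $R_j\ge\rho^{k-1}$ together with the scaling factor $\rho^{k-1}$. A degenerate case arises if some $\operatorname{fbs}_{\vz^u}(\phi_u)$ exists but has no sensitive blocks. This is excluded because $\rho\ge1$ forces the local LP optimum to be positive, and every child subtree has a sensitive block by induction.
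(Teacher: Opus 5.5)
Your proof is correct, but it argues on the primal side, whereas the paper argues on the dual side.

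\textbf{Your route.} You build an explicit fractional packing for $F_{\mathcal T}$. You combine the local packing $\{w_B\}$ for $\phi_u$ with the normalized child packings $\pi_j$ into product weights $\rho^{k-1}w_B\prod_{j\in B}\pi_j(H_j)$. The constraint check then factorizes because the children's leaf variables are disjoint.

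\textbf{The paper's route.} It first notes that, by LP duality, it suffices to show that every distribution $p$ on $Q(\mathcal{T})$ admits a sensitive block of $p$-mass at most $\rho^{-d}$. It pushes $p$ down to the root's children via $p_i=p(Q(\mathcal{T}_i))$. It then applies \Cref{lem:hard-block-query-distribution} to $\phi_r$ at $\vz^r$, obtaining a sensitive $J$ with $\sum_{i\in J}p_i\le 1/\rho$. Next it recurses into each $\mathcal{T}_i$ with $i\in J$ using the conditional distribution. Finally it takes the union of the resulting blocks, which is sensitive by the same disjointness you use.

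\textbf{What each buys.} Your route uses only primal feasibility, with no appeal to strong duality. It also yields something slightly more explicit: a single product-form distribution over sensitive blocks of $F_{\mathcal T}$, i.e., over alternative oracles $\horacle_N$, with every leaf marginal at most $\rho^{-d}$. That is a checker-independent, Yao-style hard distribution. The paper's route is shorter and reuses the lemma already needed for \Cref{thm:collapsed-query-lower-bound}. It also directly produces the checker-specific light block that theorem consumes.

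\textbf{Minor remark.} Sensitive blocks are nonempty and the children's variable sets are disjoint, so each $H$ arises from a unique tuple $(B,(H_j)_{j\in B})$. Hence summing contributions is never actually needed, and the ``at most'' in your packing bound is in fact an equality.
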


\begin{proof}
By LP duality, it is enough to show that every distribution $p$ on $Q(\mathcal{T})$ admits a sensitive block for $F_{\mathcal{T}}$ at $\alpha$ of $p$-mass at most $\rho^{-d}$. We prove this by induction on $d$.

For $d=0$ the claim is immediate. For $d\ge1$, let $r$ be the root and let $\mathcal{T}_i$ be the subtree rooted at its $i$th child. Put $p_i=p(Q(\mathcal{T}_i))$. Because the leaf queries are pairwise distinct, the sets $Q(\mathcal{T}_i)$ are disjoint, so $(p_1,\ldots,p_q)$ is a distribution on $[q]$. By \Cref{lem:hard-block-query-distribution} applied to $\phi_r$ at $\vz^r$, there is a sensitive block $J\subseteq[q]$ with $\sum_{i\in J}p_i\le1/\rho$.

For each $i\in J$, apply the induction hypothesis inside $\mathcal{T}_i$ to the conditional distribution $p^{(i)}$ on $Q(\mathcal{T}_i)$, choosing an arbitrary distribution if $p_i=0$. This gives a sensitive block $H_i\subseteq Q(\mathcal{T}_i)$ with $p^{(i)}(H_i)\le\rho^{-(d-1)}$. Let $H=\bigcup_{i\in J}H_i$. Flipping $H$ flips exactly the child values in $J$, and $J$ is sensitive for $\phi_r$; hence $H$ is sensitive for $F_{\mathcal{T}}$. Finally,
\[
p(H)=\sum_{i\in J:p_i>0}p_i\,p^{(i)}(H_i)
\le \rho^{-(d-1)}\sum_{i\in J}p_i
\le \rho^{-d}.
\]
\end{proof}

\ifinlineproofs
We can now complete the proof of \Cref{thm:adaptive-query-lower-bound}.
\else
We now return to the main lower bound.
\AdaptiveLowerBound*
\fi

\begin{proof}
By \Cref{lem:independent-leaf-composition}, $\rho_{\mathcal{T}}\ge\rho^{d}\ge1$. The claim then follows from \Cref{thm:collapsed-query-lower-bound}, since
\[
\min\left\{1,\frac{m}{\rho_{\mathcal{T}}}\right\}\le\min\{1,m\rho^{-d}\}.
\]
\end{proof}

Consequently, if a checker outputs the correct root value with probability at least $1-\varepsilon$ under each oracle, for some $\varepsilon<1/2$, then $1-2\varepsilon\le m\rho^{-d}$ and hence $m\ge(1-2\varepsilon)\rho^{d}$.

\begin{remark}
The $\rho^d$ scale in \Cref{thm:adaptive-query-lower-bound} is the standard composition barrier for fractional certificates. Once $\mathcal{T}$ is fixed, its leaves define the composed Boolean function $F_{\mathcal{T}}$. The composition theory of Gilmer, Saks, and Srinivasan~\citep{GSS16}, together with the equivalence between fractional and randomized certificates, recovers the same $\Omega(\rho^d)$ randomized-query lower bound in the independent-leaf case. \Cref{thm:adaptive-query-lower-bound} is the corresponding debate-verification form: it keeps the revealed tree fixed, allows arbitrary adaptive oracle queries, and gives the explicit advantage bound $m\rho^{-d}$.
\end{remark}

\else
We refer the reader to \Cref{sec:lower-bound-app} for the proof of \Cref{thm:adaptive-query-lower-bound}.
\fi

\section{Discussion and Open Problems}
Our protocol and matching lower bound show that fractional block sensitivity precisely characterizes the set of problems that can be efficiently solved by debate within this recursive-decomposition framework. Notably, the protocol does not require Bob to determine whether or not Alice's answers to subproblems are correct, but only to understand the structure of the argument well enough to sample subproblems to which the outcome is sufficiently sensitive. This stands in contrast to the original $\text{Debate}=\text{PSPACE}$ result of \cite{irving2018ai}, which requires Bob to actually directly determine which of Alice's subproblems are flawed. Of course, this comes with the limitation that the identification of flawed subproblems might actually have much greater computational complexity than creating the subproblems. This is not an issue if we assume that both Alice and Bob have the power to solve any problem in PSPACE, but is a serious obstacle if the debaters are assumed to be computationally bounded (as they will be in practice).

Given this discussion, the question remains: is it possible to design a protocol that goes beyond the fractional block sensitivity barrier we identify? One might hope that utilizing Bob's ability to solve the same problems as Alice could help with this issue, perhaps by allowing Bob to take advantage of Alice's computational limitations in some way. Alternatively, our lower bounds only apply when human judgment is modeled as a black box, accessible only via queries. Classical results in interactive proofs like $\text{IP}=\text{PSPACE}$ show that opening up the black box yields much stronger protocols. It seems implausible that these methods can be adapted to the case of human judgment---one would need to essentially be able to ``read the code'' that implements the human judgment. However, it is still possible that some other relaxation of the black-box oracle assumption could lead to better debate protocols.

\section*{Acknowledgments}
JL is supported by Scott Aaronson's Coefficient Giving grant.
ZX is supported by the NSF AI Institute for Foundations of Machine Learning (IFML).

\bibliographystyle{plainnat}
\bibliography{standard}

\end{document}